%% file: becare_arxiv.tex
\documentclass[letterpaper]{article}
\usepackage[preprint]{aaai2027}
\usepackage[hyphens]{url}
\usepackage{graphicx}
\usepackage{natbib}
\usepackage{caption}
\usepackage{algorithm}
\usepackage{algorithmic}
\usepackage{booktabs}
\usepackage{amsmath}
\usepackage{amssymb}
\usepackage{colortbl}
\usepackage{multirow}
\usepackage{tikz}
\usetikzlibrary{positioning, arrows.meta}
\newcommand{\Full}{\textsc{Full}}
\newcommand{\Cache}{\textsc{Cache}}
\newcommand{\Ncap}{\ensuremath{N_{\mathrm{cap}}}}
\newcommand{\softcap}{\textbf{BeCARE}}
\newtheorem{proposition}{Proposition}
\title{BeCARE: Budgeted Cache Refresh for Diffusion Transformer Acceleration}

\author{
    \resizebox{0.96\textwidth}{!}{%
        Yuhang Zhang\textsuperscript{\rm 1,*},
        Junxiang Qiu\textsuperscript{\rm 2,*},
        Huixia Ben\textsuperscript{\rm 3},
        Zhenhua Tang\textsuperscript{\rm 4},
        Lin Liu\textsuperscript{\rm 2},
        Shuo Wang\textsuperscript{\rm 2},
        Yanbin Hao\textsuperscript{\rm 1,\ensuremath{\dagger}}%
    }
}
\affiliations{
    \resizebox{0.98\textwidth}{!}{%
        \textsuperscript{\rm 1}Hefei University of Technology,
        \textsuperscript{\rm 2}University of Science and Technology of China,
        \textsuperscript{\rm 3}Anhui University of Science and Technology,
        \textsuperscript{\rm 4}University of Macau%
    }\\[-0.05em]
    \resizebox{0.96\textwidth}{!}{%
        zhangyh@mail.hfut.edu.cn, \{qiujx, liulin0725\}@mail.ustc.edu.cn,
        \{huixiaben, shuowang.edu\}@gmail.com, zhenhuat@foxmail.com,
        haoyanbin@hfut.edu.cn%
    }\\[-0.15em]
    {\small \textsuperscript{*}Equal contribution. \textsuperscript{\ensuremath{\dagger}}Corresponding author.}
}

\begin{document}

\maketitle

\begin{abstract}
Training-free feature caching accelerates diffusion transformer (DiT) inference by reusing or forecasting intermediate features. However, fixed schedules make compute predictable but ignore prompt- and timestep-dependent risk, whereas hand-tuned error thresholds adapt locally but leave realized compute difficult to control. To address these limitations, we present \textbf{B}udg\textbf{e}ted \textbf{Ca}che \textbf{Re}fresh (\textbf{BeCARE}), a training-free framework that, given a user-specified cap on the number of full computations, adaptively determines when to refresh the cache during accelerated inference.
Specifically, we first derive an error-amplification profile from the sampler's noise schedule to characterize the varying impact of approximation errors across denoising timesteps. We then combine this profile with prompt-specific extrapolation residuals and cache age to form an accumulated risk score, triggering a refresh when continued caching becomes harmful. Meanwhile, we use the same profile to construct an analytic spending reference and adjust the refresh threshold through feedback, thereby distributing the limited refresh budget over the sampling trajectory. Finally, structural safeguards (a fixed warmup, a budget-derived late-stage reserve, and a maximum cache-reuse length) prevent unreliable extrapolation and premature budget exhaustion. These designs enable prompt-adaptive refresh placement and allow a single calibration to transfer across budget tiers without per-tier tuning.
Experiments on FLUX.1-dev with 200 prompts and measured FLOPs show that our method consistently outperforms representative training-free caching baselines across speedups from $3\times$ to $6\times$. At the main operating point ($3.3\times$ acceleration), it improves paired PSNR by \textbf{2.7} dB over the strongest baseline using no more compute. On SD3.5 Large, it also substantially improves the same Taylor cache engine over fixed-interval scheduling at matched compute.
\end{abstract}

\input{sec/1_introduction}

\input{sec/2_related_work}

\input{sec/3_method}

\input{sec/4_experiments}

\input{sec/5_conclusion}

\clearpage
\bibliography{custom}

\end{document}

%% file: sec/1_introduction.tex
\section{Introduction}

Diffusion models have become a dominant paradigm for high-fidelity image and video generation~\citep{ldm,videoldm}, with diffusion transformers (DiTs) emerging as a scalable backbone~\citep{dit}. However, their strong capability comes with substantial inference cost. Each denoising timestep repeatedly executes a deep stack of Transformer blocks, and the sampling process typically requires dozens of such steps. This heavy sequential computation leads to high latency and FLOPs, posing a major obstacle to the practical deployment of DiTs. Therefore, accelerating DiT inference while preserving generation quality has become an important research problem.

\begin{figure}[t]
\centering
\includegraphics[width=\columnwidth]{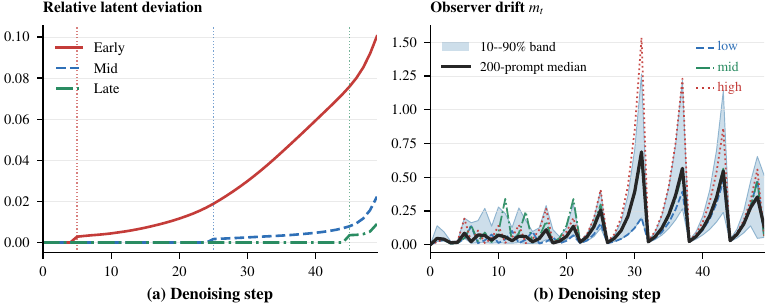}
\caption{
\textbf{Temporal and prompt-dependent structure of cache-refresh signals.}
(a) Relative latent deviation from the uncorrupted trajectory after a single order-2 Taylor intervention early (step 5), mid (25), or late (45).
(b) Online drift signal under the deployed policy (200 prompts): per-step median, 10--90\% band, and three example prompts.}
\label{fig:motivation}
\end{figure}

To alleviate this computational bottleneck, existing DiT acceleration methods mainly follow two directions. The first reduces the number of denoising steps using non-Markovian samplers~\citep{ddim}, high-order solvers~\citep{dpmsolver}, or model distillation~\citep{mgct}. Although effective, these methods typically modify the original sampling process or require additional training. The second reduces the computational cost of each denoising step through feature caching. This strategy performs a full Transformer computation at selected steps, stores the resulting intermediate features, and reuses or forecasts them at subsequent steps. Existing studies have mainly focused on improving \emph{how} skipped computations are approximated, ranging from direct feature reuse~\citep{deepcache, fora, teacache} to feature forecasting, error correction, and speculative verification~\citep{taylorseer,goc,foca,speca}. A complementary line refines \emph{which} components to approximate, at token~\citep{toca,tap}, cluster~\citep{clusca}, frequency~\citep{freqca}, or subspace~\citep{svdcache,hyca} granularity. This paradigm requires no additional training, preserves the pretrained model and sampler configuration, and can be readily applied to existing diffusion transformers.

Despite this rapid progress, existing caching methods still determine cache refresh timing using fixed intervals~\citep{fora,taylorseer} or hand-tuned error thresholds~\citep{teacache,speca}.
The former makes compute predictable but ignores temporal and prompt-dependent risk; the latter adapts locally but is \emph{budget-blind}: its realized compute is an emergent outcome of the threshold, so meeting a deployment target requires searching thresholds by trial and error.
Moreover, these timestep-wise triggers implicitly assume that approximation errors have similar consequences regardless of when they occur. This assumption fails in sequential denoising. 
We probe this directly: on three fixed prompts, we replace the full computation with a single order-2 Taylor approximation at step 5, 25, or 45 of a 50-step trajectory, keep the remaining 49 steps exact, and track the relative latent deviation from the uncorrupted run (Figure~\ref{fig:motivation}~(a)). We find that the early intervention never recovers, because every subsequent exact step evolves from a shifted state: its terminal deviation ($0.100$, red) is roughly $11\times$ that of the late one ($0.009$), with the mid intervention in between. Part of this asymmetry is predictable in advance: the sampler fixes its noise schedule before sampling begins, and this schedule already indicates where along the trajectory errors are most consequential. An offline schedule alone is not sufficient, however. Figure~\ref{fig:motivation}~(b) tracks the drift signal our controller observes online (the deviation of each step's first-block embedding from its Taylor prediction, under the deployed policy) across all 200 benchmark prompts: the per-step median is far from flat, and at a single denoising step the 10--90\% band spans up to $1.07$, so a prediction that is still reliable for one prompt has already drifted badly for another. No fixed refresh pattern serves all prompts equally well. 
Consequently, under the same budget of full computations and identical FLOPs, changing only the placement of these computations can shift output fidelity by several decibels. At matched compute, the bottleneck is where full computations are placed, not how many there are.

We therefore formulate cache refresh scheduling as a feedback-control problem and present \textbf{B}udg\textbf{e}ted \textbf{Ca}che \textbf{Re}fresh (\textbf{BeCARE}), a training-free control layer built on top of a forecasting-based caching engine. 
\textbf{BeCARE} first extracts an \emph{amplification profile} from the sampler's noise schedule, which provides a prior on the impact of approximation errors at different denoising stages. 
During sampling, an observer weights the engine's own extrapolation residuals by this profile and by the current cache age, and accumulates them into a risk score that triggers a refresh once continued caching becomes harmful. Because the residuals are measured online for each prompt, refresh positions are not fixed in advance: prompts with fast-moving features receive their refreshes earlier and closer together, while for easier prompts the controller holds refreshes back for later steps. This addresses the prompt-to-prompt variability shown in Figure~\ref{fig:motivation}~(b).
However, risk-based triggering alone remains budget-blind. 
To address this issue, \textbf{BeCARE} accepts a user-specified cap on the number of full computations and closes a feedback loop around the budget: the cumulative form of the same amplification profile serves as an analytic reference for how much of the cap should have been spent by each step, and a feedback law compares realized spending with this reference and adjusts the refresh threshold, so the cap is neither exhausted early nor left unused late. Finally, a small set of structural guards completes the loop: a fixed initial warmup, plus a reserve for the final steps and a limit on how long a cache may be reused, both following from the cap in closed form. With these pieces the realized compute matches the specification, and one calibration at a single operating point transfers across budget tiers with no per-tier tuning.

The contributions of our method are threefold:
\begin{itemize}
    \item 
    We show that the harm of a cache approximation depends strongly on \emph{when} it occurs, that this asymmetry is anticipated by an amplification profile read directly from the sampler's noise schedule at almost zero cost, and that at fixed FLOPs the placement of full evaluations, not their number, dominates output quality.
    \item 
    We turn the when-to-refresh decision from hand-tuned thresholds into feedback control, in which a single amplification profile drives both a drift-risk observer and a reference-tracking budget controller, governed by one user-facing knob and a small set of structural guards.
    \item 
    On FLUX.1-dev, \softcap{} attains the best full-referenced fidelity among training-free caching baselines at every tested ratio from $3\times$ to $6\times$, while meeting tight budgets exactly. On SD3.5 Large, it substantially improves the same Taylor cache engine over fixed-interval scheduling at matched compute, demonstrating the value of prompt-adaptive refresh placement.
\end{itemize}

%% file: sec/2_related_work.tex
\begin{figure*}[t]
\centering
\includegraphics[width=0.95\textwidth]{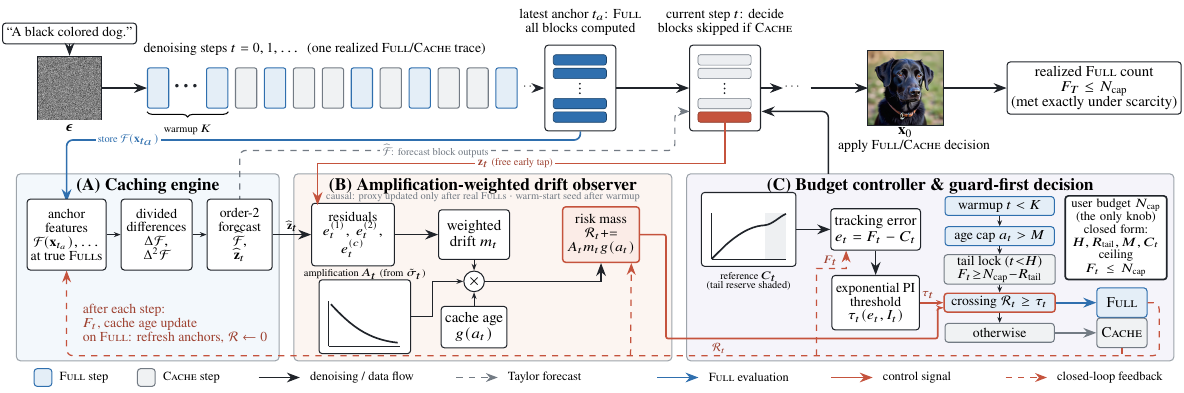}
\caption{\softcap{} overview. Top: one realized \Full{}/\Cache{} trace (warmup in the first $K$ steps). \textbf{(A)} A TaylorSeer-style engine maintains divided differences over dynamically placed anchors and forecasts the skipped block outputs; \textbf{(B)} the observer accumulates amplification- and age-weighted extrapolation residuals into the risk mass $\mathcal{R}_t$; \textbf{(C)} the controller tracks the amplification-shaped reference $C_t$ with an exponential PI threshold under the user budget $\Ncap$, and a guard-first priority chain issues the \Full{}/\Cache{} decision. Red edges carry control signals; dashed red edges close the loop after every step.}
\label{fig:pipeline}
\end{figure*}

\section{Related Work}
Diffusion inference acceleration reduces either the \emph{number} of denoising steps or the \emph{cost} of each step.

\refstepcounter{subsection}\paragraph{\thesubsection\quad Sampling-step reduction. }
The first direction shortens the sampling trajectory itself. Non-Markovian samplers such as DDIM~\citep{ddim} reformulate the reverse process so that steps can be skipped without retraining, and high-order ODE solvers such as DPM-Solver~\citep{dpmsolver} exploit the semi-linear structure of the sampling ODE to reach comparable quality within a few tens of evaluations; distillation and consistency-style transfer~\citep{mgct} instead train students that denoise in very few steps. These methods deliver substantial speedups, but they modify the sampling procedure or require additional training, and quality degrades sharply once the step count becomes very small, a regime our step-reduced baselines quantify directly.

\refstepcounter{subsection}\paragraph{\thesubsection\quad Caching.}
The second direction keeps the sampler and model weights intact and exploits the similarity of intermediate features across adjacent timesteps. Reuse-based methods replay stored module outputs~\citep{deepcache,fora}, later work forecasts or corrects them instead (Taylor extrapolation~\citep{taylorseer}, gradient-based error compensation~\citep{goc,eoc}, linear-multistep and ODE-solver forecasters~\citep{predit,foca}), and a parallel line refines the granularity of caching (token~\citep{toca,tap}, cluster~\citep{clusca}, frequency~\citep{freqca}, subspace~\citep{svdcache,hyca}, hybrid~\citep{hgc}). All of these advances concern \emph{how} the skipped computation is approximated; the decision of \emph{when} to recompute has drawn less attention. Periodic policies recompute at a fixed interval~\citep{fora,taylorseer}; TeaCache accumulates an embedding-modulated change estimate against a hand-tuned threshold~\citep{teacache}; SpeCa verifies Taylor drafts step-by-step and falls back upon rejection~\citep{speca}; RFC uses input prediction error as a proxy for output error~\citep{rfc}. Fixed intervals make compute predictable but cannot respond to local risk; threshold-based triggers adapt locally, yet their compute remains an emergent outcome of a threshold rather than a specification. Neither class jointly treats compute as an input specification and refresh placement as a risk-adaptive decision; existing schedulers also neither reserve computation for later phases nor account for the time-varying consequences of approximation errors under fixed budgets.

%% file: sec/3_method.tex
\section{Method}
\label{sec:method}

\subsection{Preliminaries}
\label{sec:prelim}

\paragraph{Diffusion models.}
A diffusion model generates an image by inverting a gradual noising process~\citep{ddpm}.
In the flow-matching formulation used by modern diffusion transformers~\citep{rectifiedflow,flux2024}, the forward path linearly interpolates between data sample $\mathbf{x}_0$ and noise $\boldsymbol{\epsilon}$ at noise level $\sigma$:
\begin{equation}
\mathbf{x}_\sigma=(1-\sigma)\,\mathbf{x}_0+\sigma\,\boldsymbol{\epsilon},
\qquad \sigma\in[0,1],
\label{eq:forward}
\end{equation}
and generation integrates a learned velocity field backward along a discretized schedule over $T$ steps:
\begin{equation}
\mathbf{x}_{t+1}=\mathbf{x}_t+(\sigma_{t+1}-\sigma_t)\,\mathbf{v}_\theta(\mathbf{x}_t,\sigma_t),
 t=0,\dots,T-1,
\label{eq:reverse}
\end{equation}
where every evaluation of $\mathbf{v}_\theta$ runs a diffusion transformer~\citep{dit} of dozens of blocks; with $T$ in the tens, these full evaluations dominate inference cost.

\paragraph{Feature caching.}
Feature caching exploits similar module features at adjacent steps. \Full{} evaluations execute every block and refresh per-module caches, whereas \Cache{} steps skip the expensive blocks and approximate their outputs from cached state. We use a TaylorSeer-style forecasting engine~\citep{taylorseer}; module indices are omitted for clarity. Let $\mathcal{F}(\mathbf{x}_{t_a})$ be a module's feature at the latest \Full{} anchor $t_a$. A cached step at distance $d$ is predicted by the truncated Taylor expansion
\begin{equation}
\widehat{\mathcal{F}}(\mathbf{x}_{t_a+d})=\mathcal{F}(\mathbf{x}_{t_a})+\textstyle\sum_{i=1}^{O}\Delta^{i}\mathcal{F}(\mathbf{x}_{t_a})\,d^{\,i}/\,i!\,,
\label{eq:taylor}
\end{equation}
with order $O{=}2$, where $\Delta^{i}\mathcal{F}$ is the $i$-th order finite difference of the feature over \Full{} anchors. Because our scheduler places anchors dynamically, their spacing is not constant, so the engine maintains \emph{divided} differences, updated recursively from the two most recent anchors $t_a$ and $t_{a}'$:
\begin{equation}
\Delta^{i}\mathcal{F}(\mathbf{x}_{t_a})=
\begin{cases}
\mathcal{F}(\mathbf{x}_{t_a}), & i=0,\\
\frac{\Delta^{i-1}\mathcal{F}(\mathbf{x}_{t_a})
-\Delta^{i-1}\mathcal{F}(\mathbf{x}_{t_a'})}
{t_a-t_a'}, & i>0.
\end{cases}
\label{eq:findiff}
\end{equation}

For anchors uniformly spaced at interval $N$, Eq.~\ref{eq:findiff} recovers the standard TaylorSeer differences with $1/N^{i}$ normalization. Because \Cache{} steps skip the expensive blocks, per-image compute is essentially linear in the number of \Full{} evaluations (measured fit in the technical supplement).

\paragraph{Problem statement.}
Given a compute budget $\Ncap$, a causal online policy must choose at each step whether to spend one of at most $\Ncap$ \Full{} evaluations or to cache. We formulate scheduling as \emph{reference tracking}: the policy maintains a cumulative spending target $C_t$ with $C_{T-1}=\Ncap$, compares it with the realized count $F_t$, and modulates a risk threshold to concentrate full evaluations where needed while keeping cost predictable. The budget is a \emph{soft ceiling}: $F_t\le\Ncap$ is a hard cap, but the policy need not spend up to it.

Figure~\ref{fig:pipeline} gives an overview of \softcap{}. 
At every step, an \emph{observer} turns the engine's extrapolation residual into an amplification- and age-weighted risk mass; a \emph{controller} derives an analytic spending reference from the budget and adjusts the trigger threshold by exponential PI feedback; and \emph{structural guards} bound failure modes that feedback cannot exclude, with safety rules outranking budget rules. A step runs \Full{} when the accumulated risk crosses the threshold or a guard expires; otherwise it caches.

\subsection{Amplification-Weighted Risk Observer}
\label{sec:observer}

The observer estimates how dangerous it is to keep caching at step $t$ as the product of three factors, each available at run time: how strongly an error at this step propagates (a schedule-derived prior), how far the current prompt's trajectory has drifted, and how stale the cache already is.

\paragraph{Schedule-derived amplification prior.}
Every diffusion sampler fixes its noise schedule before sampling begins. On FLUX, for instance, the sampler assigns step $t$ a shifted noise level $\tilde{\sigma}_t$; we use its mean-normalized, floored profile
\begin{equation}
A_t=\max(\tilde{\sigma}_t,f)\Big/\Big(\textstyle\sum_{j=0}^{T-1}\max(\tilde{\sigma}_j,f)/T\Big),
\label{eq:amp}
\end{equation}
as a \emph{prior} on how strongly a unit of approximation error at step $t$ propagates through the remaining trajectory: early errors are reshaped by the entire remaining denoising process, late errors remain mostly local. The shape of $A_t$ is read from the sampler's own schedule rather than tuned; the floor $f$ keeps late-step weights from vanishing (measured drift does not vanish there), and mean normalization keeps $A_t$ from rescaling the global threshold (calibrated values in the technical supplement). $A_t$ is a heuristic weighting, not a verified pointwise damage profile.

\paragraph{Causal Taylor-residual signal.}
The observer measures how far the engine's own extrapolation has already drifted. Let $\mathbf{z}_t$ be the image-token input to the first Transformer block (produced by a lightweight embedding that every step computes anyway), and let $\widehat{\mathbf{z}}_t$ be its order-2 Taylor prediction built from previous \Full{} anchors and updated \emph{only} after a real \Full{} evaluation, so the signal is causal and prompt-specific. We form three residuals between $\mathbf{z}_t$ and its prediction:
\begin{equation}
\begin{aligned}
e^{(1)}_t&=\frac{\|\mathbf{z}_t-\widehat{\mathbf{z}}_t\|_1}{\|\mathbf{z}_t\|_1+\epsilon_{\mathrm{n}}},\qquad
e^{(2)}_t=\frac{\|\mathbf{z}_t-\widehat{\mathbf{z}}_t\|_2}{\|\mathbf{z}_t\|_2+\epsilon_{\mathrm{n}}},\\
e^{(c)}_t&=\mathrm{clip}\big(1-\cos(\mathbf{z}_t,\widehat{\mathbf{z}}_t),\,0,\,2\big),
\end{aligned}
\label{eq:residuals}
\end{equation}
where $\epsilon_{\mathrm{n}}$ is a small numerical stabilizer. The drift signal fuses the three floored residuals with fixed weights,
\begin{equation}
m_t=\sum_{j} w_j \max\big(e^{(j)}_t,\,\epsilon_{\mathrm{f}}\big),
\qquad j\in\{1,2,c\},
\label{eq:mt}
\end{equation}
where $\epsilon_{\mathrm{f}}$ is a small numerical floor and the weights $w_j$ are engineering scales that bring the three terms to comparable magnitude (the raw cosine gap is orders of magnitude smaller than the norm residuals), fixed once during calibration and frozen (values in the technical supplement).

\paragraph{Risk-mass accumulation.}
Caching for longer accumulates error rather than resetting it, so the decision variable is an integral. With prospective cache age $a_t$ (the age the cache would reach if step $t$ caches),
\begin{equation}
r_t=A_t\, m_t\, g(a_t),\qquad \mathcal{R}_t=\mathcal{R}_{t-1}+r_t,
\label{eq:risk}
\end{equation}
where $g(\cdot)$ is a monotone cache-age multiplier estimated once from a development-set audit of true cache errors (per-age means normalized to age 1; values in the technical supplement). The accumulated mass $\mathcal{R}_t$ thus grows with the prompt's measured drift, with the phase-dependent amplification, and with cache staleness.

\paragraph{Warm start, trigger, and reset.}
The first $K$ warmup steps are forced \Full{} (a structural guard, below). Warmup predictions use only true \Full{} anchors (missing step-$0$ residuals take $\epsilon_{\mathrm{f}}$), so the accumulator exits warmup seeded with the running mean $\mathcal{R}_{K-1}=\mathrm{mean}(r_0,\dots,r_{K-1})$: high-drift prompts reach their first crossing slightly earlier, while low-drift prompts are unchanged. A step triggers a \emph{crossing} when $\mathcal{R}_t\ge\tau_t$ (threshold defined below); after every non-warmup \Full{}, the accumulator resets to zero. The asymmetry is deliberate: re-seeding from cached-segment observations injects extrapolation error and collapses sequence diversity (negative result in the technical supplement).

\subsection{Soft-Budget Feedback Controller}
\label{sec:controller}

\paragraph{Budget ledger.}
The controller partitions the budget in closed form. With warmup length $K{=}4$ and tail reserve $R_{\mathrm{tail}}(\Ncap)=\mathrm{round}(4\Ncap/15)$ (the calibrated main-tier value scaled proportionally and withheld for the endgame), the freely allocable front budget is $P=\Ncap-K-R_{\mathrm{tail}}$, and the release horizon is
\begin{equation}
H=\mathrm{clip}\big(\lfloor \rho_H T\rfloor{+}1,\;K{+}1,\;T{-}1\big),
\label{eq:horizon}
\end{equation}
with the fraction $\rho_H$ fixed once for all experiments (numerical values in the experimental setup).

\paragraph{Analytic reference curve.}
The spending target is piecewise analytic, shaped by the same amplification prior that weights the observer:
\begin{equation}
C_t=\begin{cases}
t+1, & t<K,\\[2pt]
K+P\,\frac{\sum_{j=K}^{t}A_j}{\sum_{j=K}^{H-1}A_j}, & K\le t<H,\\[4pt]
K+P+R_{\mathrm{tail}}\big(\frac{t-H+1}{T-H}\big)^{q}, & H\le t<T,
\end{cases}
\label{eq:refcurve}
\end{equation}
with tail power $q{=}2$ (calibrated once and shared across tiers), followed by a running cumulative maximum and the endpoint pin $C_{T-1}=\Ncap$. The front segment allocates $P$ evaluations along the CDF of $A_t$ (denser where amplification is high), and the tail segment releases the reserved $R_{\mathrm{tail}}$ toward the end of the trajectory; $C_t$ is a real-valued \emph{reference}, not a prescribed schedule that spending must match exactly.

\paragraph{Exponential PI feedback.}
Let $F_t$ be the \Full{} count before the current decision. The threshold follows an exponential proportional--integral law on the tracking error $e_t$ and its integral $I_t$:
\begin{equation}
e_t=F_t-C_t,\qquad
\tau_t=\tau_0\exp\!\big(K_p e_t+K_i I_t\big),
\label{eq:pi}
\end{equation}
with the exponent clipped for numerical stability. Spending ahead of the reference suppresses crossings, falling behind invites them, and the exponential form keeps the threshold positive. A \emph{hard gate} additionally forbids crossings outright whenever spending runs more than a quarter evaluation ahead of the reference; safety guards (below) are exempt. The base threshold $\tau_0$ (the mean amplification-weighted drift per allocatable slot on the development prompts, scaled by a safety factor) and the gains $(K_p,K_i)$ are calibrated once and frozen across all budget tiers (values in the technical supplement).

\paragraph{Soft-ceiling semantics.}
$\Ncap$ is a ceiling, not a quota: under tight budgets demand exceeds supply and every run hits it exactly, whereas under a loose one low-risk prompts may naturally stop below it; no terminal catch-up evaluation is inserted, and realized FLOPs are metered as spent (empirical behavior in Experiments).

\input{tab/main_results}

\subsection{Structural Guards and Cross-Tier Scaling}
\label{sec:guards}

Feedback alone cannot prevent early budget exhaustion or unbounded extrapolation. Three structural rules address these failures: a fixed warmup required for engine initialization, plus an age cap and tail reserve derived from $(\Ncap,T)$ in closed form.

\paragraph{Derivative warmup.}
Steps $0,\dots,K{-}1$ are forced \Full{}. Order-2 extrapolation requires a history of true anchors to initialize its difference factors; the specific choice $K{=}4$ is validated by ablation rather than uniquely implied by the engine order (see Experiments).

\paragraph{Age cap.}
A cache may not be extended past
\begin{equation}
M(\Ncap)=\left\lfloor\frac{T-K}{\Ncap-K}\right\rfloor+1,
\label{eq:agecap}
\end{equation}
consecutive cached steps (Eq.~\ref{eq:agecap} requires $\Ncap>K$, which every tier satisfies); expiry forces a \Full{}. The numerator is the per-slot horizon share after warmup, and the $+1$ leaves slack so that the guard does not itself degenerate into a uniform schedule; this yields $M{=}3,5,6,8$ at $\Ncap{=}20,15,12,10$.

\paragraph{Tail reserve.}
Before horizon $H$, once ordinary crossings have consumed $\Ncap-R_{\mathrm{tail}}$ evaluations, further crossings are locked and the step caches, preserving capacity for the tail; after $H$, the reserve is released along the tail segment of Eq.~\ref{eq:refcurve}. The reserve is a reason to \emph{cache}, not a source of full evaluations: at $\Ncap{=}15$, the realized budget averages $4$ warmup, $7.7$ crossing, and $3.3$ age-cap evaluations, and the reserve never appears as a trigger. It prevents early high-risk spending from stripping the endgame of anchors. Removing it at identical \Full{} count and FLOPs is the most damaging ablation we test (see Experiments).

\paragraph{Guard-first priority.}
By definition, the decision priority is \emph{warmup} $\to$ \emph{age-cap} $\to$ \emph{tail lock} $\to$ \emph{crossing} $\to$ \emph{cache}: the age-cap safety guard outranks the budget lock. Reversing them violates the advertised age bound at low budgets (a 13-step pure-extrapolation gap against a promised 9; complete decision rule and traced example in the technical supplement). The two guard contracts are exact:

\begin{proposition}[Controller invariants]
\label{prop:invariants}
Under the guard-first rule, every realized trajectory satisfies, for all $t$: (i) $F_t\le\Ncap$; and (ii) whenever $F_t<\Ncap$, at most $M(\Ncap)$ consecutive steps cache, i.e., successive \Full{} anchors are at most $M{+}1$ steps apart.
\end{proposition}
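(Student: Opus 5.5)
The plan is to argue by induction on $t$, using the decision priority chain \emph{warmup} $\to$ \emph{age-cap} $\to$ \emph{tail lock} $\to$ \emph{crossing} $\to$ \emph{cache} as the state-transition rule. The only state we need is the pair $(F_t, a_t)$: the \Full{} count before step $t$ and the current run of consecutive cached steps. The first task is to make explicit what the guard-first rule implicitly assumes, namely that \emph{every} rule that issues a \Full{} first checks $F_t<\Ncap$. Once $F_t=\Ncap$, every branch collapses to \Cache{}. If the paper's complete rule (deferred to the supplement) does not state this check, I would add it as the defining semantics of a hard cap, since it is exactly what ``$F_t\le\Ncap$ is a hard cap'' in the problem statement asserts.

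For part (i), the base case is the warmup. Every tier has $\Ncap>K$, which is needed for Eq.~\ref{eq:agecap} to be defined, so the $K$ forced \Full{} steps leave $F_K=K<\Ncap$. For the inductive step, $F_{t+1}=F_t+\mathbf{1}[\text{step } t \text{ is \Full{}}]$. A \Full{} can be issued only by the age-cap or crossing branches, both of which require $F_t<\Ncap$. Hence $F_{t+1}\le\Ncap$. The tail lock and the hard gate only \emph{remove} \Full{} decisions, so they cannot break the bound.

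For part (ii), I would show that the age counter never exceeds $M=M(\Ncap)$ while budget remains. The age-cap guard sits above the tail lock, the hard gate, and the crossing test. So whenever a further cache would push the cached run past $M$ and $F_t<\Ncap$, the age-cap branch fires before any rule that could force \Cache{}. That step becomes \Full{} and the age resets to $0$. Inducting on the length of the current cached run then gives: if step $t$ caches and $F_t<\Ncap$, the run ending at $t$ has length at most $M$. Since anchors bracket a run of at most $M$ cached steps, successive \Full{} anchors are at most $M+1$ apart. Because $F$ is nondecreasing, the hypothesis $F_t<\Ncap$ holds throughout the preceding run, so the guard was armed at every step of it.

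The main obstacle is the interaction with the other budget mechanisms: the tail lock, the hard gate, and the $F\le\Ncap$ check inside the age-cap branch. I must verify that none of them outranks the age cap. This is precisely the counterexample the paper cites, where reversing the order produces a 13-step gap. I would handle it by writing the priority chain as a nested if--else and observing that the age-cap predicate depends only on $(a_t, F_t)$, not on $\tau_t$, $C_t$, or the lock state. Its firing is therefore unaffected by the controller. The exemption of safety guards from the hard gate, stated with Eq.~\ref{eq:pi}, supplies exactly this independence.
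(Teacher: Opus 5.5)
Your proposal is correct and takes essentially the same approach as the paper, which reads both invariants directly off the guard-first decision rule. Invariant (i) holds because warmup spends only $K<\Ncap$ and every post-warmup \Full{}-issuing branch is conditioned on $F_t<\Ncap$ (the check you rightly make explicit), and invariant (ii) holds because the age-cap test is outranked only by warmup and is exempt from the hard gate, so it fires ahead of the tail lock and the crossing test whenever budget remains.
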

\noindent\emph{Proof:} in the technical supplement, alongside the complete decision rule.

Quality is \emph{not} covered by Proposition~\ref{prop:invariants}; it is established empirically in the Experiments section.

\paragraph{Provenance and overhead.}
The technical supplement tabulates the frozen policy: each quantity is the user budget $\Ncap$, host-given ($T$, sampler schedule, engine order), derived in closed form ($H$, $R_{\mathrm{tail}}$, $M$, $C_t$), or calibrated once on the development set. Thus the method is \emph{once-calibrated and budget-transferable}: calibration at F15 extends across the primary operating range through the same closed-form rules without per-tier search. It is not parameter-free; calibration is specific to a backbone--sampler pair, and we do not claim validation beyond $T{=}50$. At run time, the observer reads $\mathbf{z}_t$, which every step already computes. Its proxy requires one Taylor extrapolation of a single early-layer tensor, while the controller and guards use $O(1)$ arithmetic per step. Both are small relative to the dozens of blocks in a \Full{} evaluation, and per-image FLOPs are deterministic given a trace (wall-clock notes are in the supplement).

%% file: tab/main_results.tex
\begin{table*}[t]
\centering
\small
\setlength{\tabcolsep}{5pt}
\begin{tabular}{l cc cc cccc}
\toprule
\textbf{Method} & \textbf{FLOPs(T)}$\downarrow$ & \textbf{Speedup}$\uparrow$ & \textbf{ImageReward}$\uparrow$ & \textbf{CLIP}$\uparrow$ & \textbf{Clean-FID}$\downarrow$ & \textbf{PSNR}$\uparrow$ & \textbf{SSIM}$\uparrow$ & \textbf{LPIPS}$\downarrow$ \\
\midrule
Full FLUX, 50 steps & 3719.50 & 1.00$\times$ & 0.9957 & 27.77 & 0.00 & $\infty$ & 1.0000 & 0.0000 \\
Full FLUX, 25 steps & 1859.75 & 2.00$\times$ & 0.9754 & 27.70 & 86.14 & 13.957 & 0.5497 & 0.4847 \\
Full FLUX, 10 steps & 743.90 & 5.00$\times$ & 0.8214 & 27.59 & 107.67 & 12.775 & 0.5006 & 0.5679 \\
\midrule
FORA ($N{=}2$) & 1860.32 & 2.00$\times$ & 0.9361 & 27.44 & 104.96 & 10.684 & 0.4349 & 0.6558 \\
TaylorSeer ($N{=}2$, $O{=}2$) & 1934.69 & 1.92$\times$ & \textbf{0.9993} & 27.65 & 68.87 & 16.849 & 0.6179 & 0.3889 \\
\rowcolor{gray!20}
\softcap{} (F20) & 1486.99 & 2.50$\times$ & 0.9945 & \textbf{27.83} & \textbf{67.28} & \textbf{17.163} & \textbf{0.6222} & \textbf{0.3805} \\
\midrule
FORA ($N{=}3$) & 1265.38 & 2.94$\times$ & 0.8891 & 27.59 & 108.37 & 10.904 & 0.4417 & 0.6565 \\
TaylorSeer ($N{=}3$, $O{=}2$) & 1339.75 & 2.78$\times$ & \textbf{1.0210} & 27.78 & 78.45 & 14.960 & 0.5692 & 0.4408 \\
ClusCa ($N{=}4$, $O{=}1$, $K{=}16$) & 1045.58 & 3.56$\times$ & 1.0061 & 27.71 & 84.99 & 13.992 & 0.5387 & 0.4796 \\
SpeCa & 1148.60 & 3.24$\times$ & 0.9834 & 27.70 & 87.55 & 13.698 & 0.5353 & 0.4943 \\
\rowcolor{gray!20}
\softcap{} (F15) & 1116.64 & 3.33$\times$ & 0.9953 & \textbf{27.84} & \textbf{70.46} & \textbf{16.717} & \textbf{0.6074} & \textbf{0.3924} \\
\midrule
FORA ($N{=}4$) & 967.91 & 3.84$\times$ & 0.8697 & 27.61 & 111.21 & 11.118 & 0.4492 & 0.6542 \\
TaylorSeer ($N{=}5$, $O{=}2$) & 893.54 & 4.16$\times$ & 0.9607 & 27.72 & 90.49 & 13.642 & 0.5244 & 0.5006 \\
ClusCa ($N{=}5$, $O{=}2$, $K{=}16$) & 897.03 & 4.15$\times$ & 0.9687 & 27.71 & 90.69 & 13.645 & 0.5243 & 0.5007 \\
SpeCa & 940.89 & 3.95$\times$ & 0.9776 & 27.69 & 91.87 & 13.285 & 0.5116 & 0.5193 \\
\rowcolor{gray!20}
\softcap{} (F12) & 893.54 & 4.16$\times$ & \textbf{1.0049} & \textbf{27.97} & \textbf{76.23} & \textbf{15.994} & \textbf{0.5729} & \textbf{0.4221} \\
\midrule
FORA ($N{=}5$) & 744.81 & 4.99$\times$ & 0.8094 & 27.62 & 114.08 & 11.252 & 0.4459 & 0.6569 \\
TaylorSeer ($N{=}6$, $O{=}2$) & 744.81 & 4.99$\times$ & 0.9801 & 27.71 & 114.54 & 10.204 & 0.3898 & 0.6877 \\
ClusCa ($N{=}6$, $O{=}2$, $K{=}16$) & 748.48 & 4.97$\times$ & \textbf{0.9860} & 27.74 & 98.14 & 13.110 & 0.4926 & 0.5386 \\
SpeCa & 745.33 & 4.99$\times$ & 0.9482 & \textbf{28.00} & 102.88 & 12.568 & 0.4730 & 0.5676 \\
\rowcolor{gray!20}
\softcap{} (F10) & 744.81 & 4.99$\times$ & 0.9809 & 27.83 & \textbf{86.18} & \textbf{14.427} & \textbf{0.5267} & \textbf{0.4721} \\
\midrule
FORA ($N{=}7$) & 596.07 & 6.24$\times$ & 0.6626 & 27.22 & 120.53 & 11.468 & 0.4511 & 0.6714 \\
TaylorSeer ($N{=}8$, $O{=}2$) & 596.07 & 6.24$\times$ & 0.8758 & 27.49 & 120.66 & 10.019 & 0.3660 & 0.7152 \\
ClusCa ($N{=}8$, $O{=}2$, $K{=}16$) & 599.93 & 6.20$\times$ & 0.8513 & 27.57 & 111.72 & 12.362 & 0.4487 & 0.5983 \\
SpeCa & 604.32 & 6.15$\times$ & 0.9207 & 28.00 & 114.84 & 12.077 & 0.4313 & 0.6245 \\
\rowcolor{gray!20}
\softcap{} (F8)$^\dagger$ & 596.07 & 6.24$\times$ & \textbf{0.9960} & \textbf{28.05} & \textbf{94.02} & \textbf{13.983} & \textbf{0.4985} & \textbf{0.5079} \\
\bottomrule
\end{tabular}
\caption{Quantitative comparison of text-to-image generation on FLUX.1-dev. \textbf{Bold} marks the best accelerated result within each compute regime. $^\dagger$Per-tier instantiation in the technical supplement.}
\label{tab:main}
\end{table*}

%% file: sec/4_experiments.tex
\section{Experiments}
\label{sec:experiments}

\subsection{Setup}

We evaluate 50-step FLUX.1-dev generation ($1024\times1024$) on the standard $200$-prompt DrawBench benchmark~\citep{imagen}. With identical prompt seeds, all one-time calibration uses only the first $20$ prompts; the remaining $180$ are untouched by development decisions, and all main-table conclusions hold on both splits. For each prompt and seed, the Full 50-step output is the reference for paired PSNR, SSIM, and LPIPS~\citep{lpips}; clean-FID~\citep{fid,cleanfid} compares the generated and Full-reference sets. We call these \emph{Full-referenced} metrics. ImageReward~\citep{imagereward} and CLIPScore~\citep{clipscore} are reference-free. We compare SpeCa~\citep{speca}, TaylorSeer~\citep{taylorseer}, FORA~\citep{fora}, and ClusCa~\citep{clusca} at the regimes in Table~\ref{tab:main}, plus 25- and 10-step Full sampling. We repeat this protocol on Stable Diffusion 3.5 Large (SD3.5 Large)~\citep{rectifiedflow} with the same prompts and seeds, separately measured FLOPs, and the same order-2 Taylor cache engine for \softcap{} and TaylorSeer. Speedups are measured-FLOPs reductions from each backbone's Full 50-step run, not latency; prompt-varying FLOPs are averaged over all $200$ prompts. Supplementary Appendices~B, C, and~F give configurations, FLOPs accounting, and complete cross-backbone results.

\subsection{Main Results}

\begin{figure}[t]
\centering
\includegraphics[width=0.98\columnwidth]{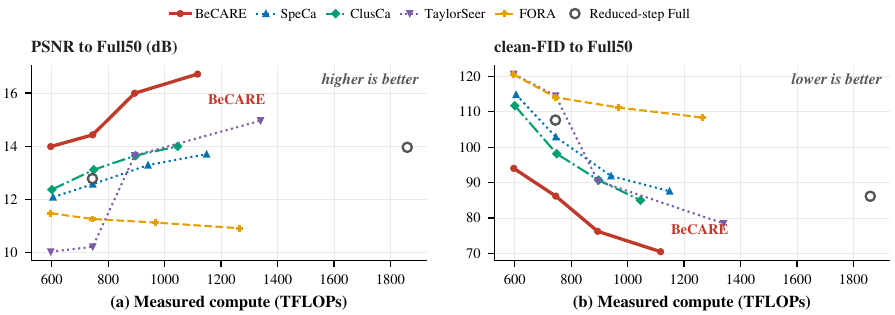}
\caption{Full-referenced quality versus measured compute on 200 prompts. \softcap{} leads the training-free baselines in PSNR (a) and clean-FID (b) at matched FLOPs, and degrades gracefully toward higher acceleration where competitor curves drop steeply.}
\label{fig:speedcurve}
\end{figure}

\input{tab/sd35_results}

Table~\ref{tab:main} compares the \softcap{} configurations (one calibration with only $\Ncap$ changed for F10--F20; F8 is a separately adapted extreme endpoint) against all baselines grouped by compute regime; Figure~\ref{fig:speedcurve} plots the corresponding quality--compute curves.

\paragraph{\softcap{} leads at every tested tier.}
\softcap{} gives the best Full-referenced fidelity among the compared training-free baselines at every tested budget. At F15 ($1116.6$~TF, $3.33\times$), it reaches $16.72$~dB PSNR and $70.46$ FID; its paired PSNR margin over the strongest external baseline using no more compute (ClusCa) is $+2.73$~dB. Near $894$~TF, F12 gains $+2.35$~dB over the best external method, while at identical $744.8$~TF ($4.99\times$), F10 gains $+3.18$~dB over the stronger of TaylorSeer and FORA. F12, F10, and F8 exactly match uniform baselines in FLOPs, so their margins arise from \emph{where} Full evaluations are placed. F10 also exceeds the 25-step Full baseline by $0.47$~dB with $40\%$ of its compute. Figure~\ref{fig:qualitative} confirms the trend visually: at the panel's least compute, \softcap{} stays closest to the reference.

\paragraph{Statistical significance.}
Paired prompt-level bootstrap against the strongest external baseline using no more compute (ClusCa, $n{=}200$) at the main operating point gives $\Delta$PSNR $+2.726$ with $95\%$ CI $[+2.263,+3.202]$; the SSIM and LPIPS intervals likewise exclude zero with $P(\mathrm{better}){=}1.0$ (protocol and full intervals in the technical supplement).

\paragraph{The knee is at F15.}
The quality--budget curve is strongly concave: each additional Full evaluation gains roughly $0.5$~dB from F10 to F15, but only $0.09$~dB from F15 to F20. This knee motivates F15 as the main operating point.

\paragraph{Risk-aware refresh placement restores fidelity on SD3.5 Large.}
At high acceleration, the fixed-interval TaylorSeer baseline suffers severe fidelity loss on SD3.5 Large (Table~\ref{tab:sd35main}). This exposes a limitation of fixed-step refresh: predetermined anchors cannot respond to prompt- and step-specific extrapolation risk. \softcap{} retains the same order-2 Taylor extrapolator and changes only where Full refreshes occur. At matched $4.094\times$ compute ($764.79$ vs. $764.77$~TF), it gains $2.76$~dB PSNR and $0.100$ SSIM, reduces LPIPS by $0.173$ and clean-FID by $26.70$, and raises ImageReward from $0.2195$ to $0.6872$ (Supplementary Appendix~F). Thus risk-aware placement alone restores substantial quality without modifying the Taylor extrapolator.

\begin{figure}[t]
\centering
\includegraphics[width=\columnwidth]{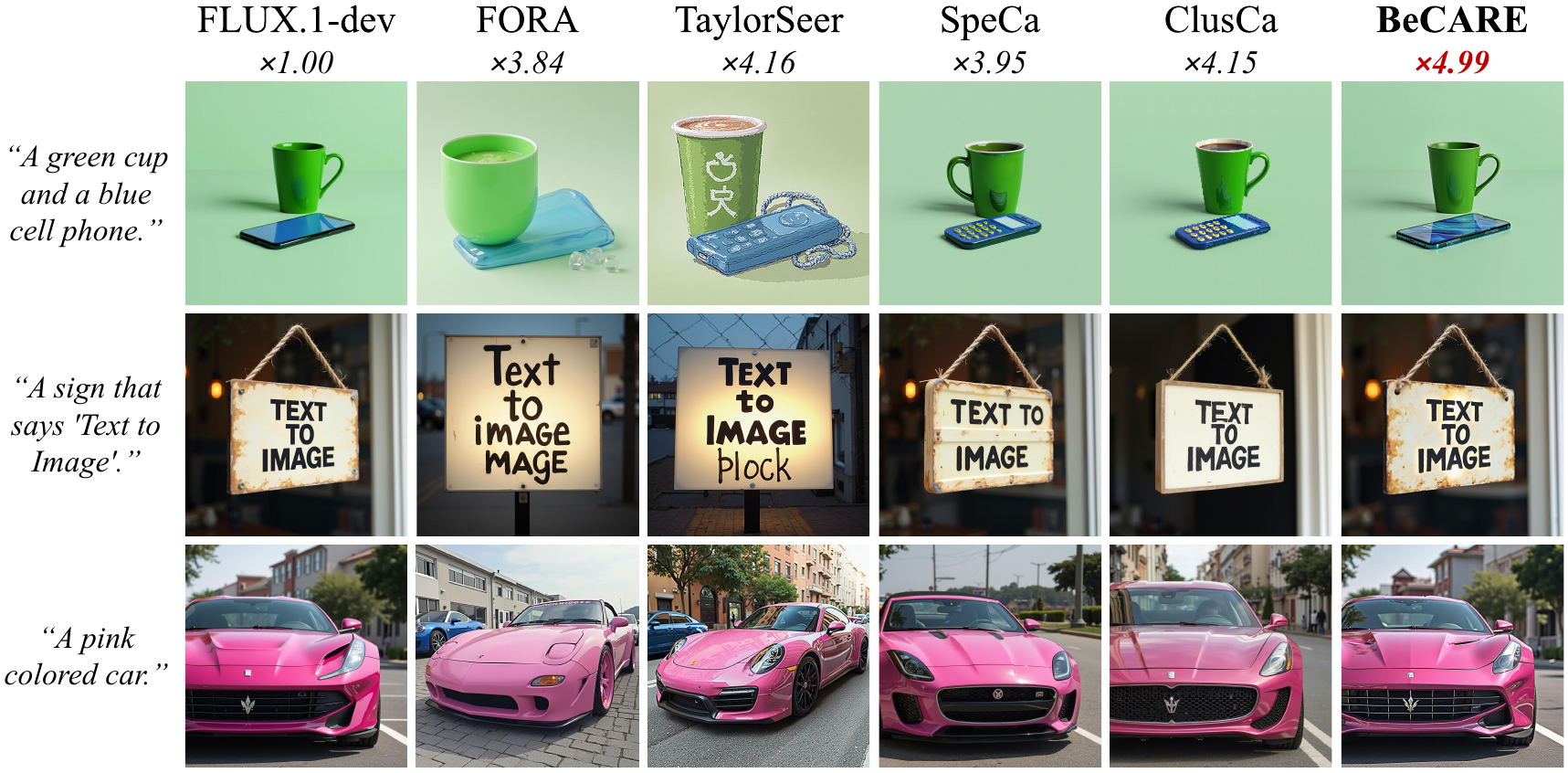}
\caption{\textbf{Qualitative comparison on FLUX.1-dev.}
\mbox{BeCARE} operates at $4.99\times$ acceleration, whereas FORA, TaylorSeer,
SpeCa, and ClusCa operate at $3.84$--$4.16\times$.
Despite less computation, BeCARE better preserves the reference
composition, text rendering, and object appearance.}
\label{fig:qualitative}
\end{figure}

\input{tab/ablation_bootstrap}

\subsection{Component Ablations}
\label{sec:exp-ablation}

\paragraph{Leave-one-out at the main operating point.}
Table~\ref{tab:ablation} removes one component at a time at F15; reruns reproduce the recorded outputs, validating the switches. The tail reserve is strongest ($-6.55$~dB) and fails at \emph{identical} Full count and FLOPs, so its value is placement rather than compute savings. PI feedback and the fourth warmup anchor each contribute about $2$~dB, while removing amplification costs $0.56$~dB at the knee. The warm-start seed and age cap have smaller effects; without the age cap, the policy underspends to $14.5$ Full evaluations. A 200-prompt control isolates placement end to end: at the same 15-Full budget and FLOPs, uniform spacing reaches $13.72$~dB and no tail reserve $13.31$~dB, versus \softcap{}'s $16.72$~dB. Hard-coded prefire steps underperform warm start in the worst case, while re-seeding after every \Full{} collapses sequence diversity (technical supplement).

\paragraph{Budget interaction and guard order.}
At F10, only ${\sim}3$ Full evaluations remain adaptive after warmup, guards, and reserve. On 20 development prompts, removing amplification changes PSNR by $+0.26$~dB with intervals crossing zero, whereas removing PI costs $0.52$~dB. Giving the age cap priority over the tail lock shortens F10's worst pure-extrapolation gap from $13$ to $9$ steps and improves all five metrics at unchanged FLOPs; F12 remains within its preregistered non-inferiority bounds (Supplementary Appendices~A and~E).

\subsection{Adaptivity and Soft-Budget Behavior}

\paragraph{Trajectories are prompt-adaptive.}
At F15, the 200 prompts realize $41$ Full-step sequences; the most common covers only $39/200$ prompts. Observer crossings trigger $7.7$ of the $15$ Full evaluations on average (the rest: $4$ warmup, $3.3$ age-cap). Diversity rises from $4$ to $103$ sequences between F8 and F20 as guards release their share, but contracts sharply at F12/F10. The technical supplement gives tracking traces and placement rasters.

\paragraph{The ceiling is soft.}
F15, F12, F10, and F8 hit their budgets on $200/200$ prompts without a terminal catch-up rule. Under surplus (F20), $4/200$ low-risk prompts stop at $19$ because their next age-cap expiry lies beyond the horizon and final risk remains below threshold. A preselected $15$-prompt hard subset carries $21\%$ more mean drift than the calibration subset, yet its mean crossing count remains close ($7.73$ vs. $7.35$) as triggered and guard-forced spending trade off.

%% file: tab/sd35_results.tex
\begin{table}[!t]
\centering
\scriptsize
\renewcommand{\arraystretch}{0.9}
\setlength{\tabcolsep}{2pt}
\begin{tabular}{@{}lcccccc@{}}
\toprule
\textbf{Method} & \textbf{FLOPs(T)}$\downarrow$ & \textbf{Speed}$\uparrow$ &
\textbf{PSNR}$\uparrow$ & \textbf{SSIM}$\uparrow$ &
\textbf{LPIPS}$\downarrow$ & \textbf{IR}$\uparrow$ \\
\midrule
Full SD3.5, 50 steps & 3130.85 & $1.000\times$ & $\infty$ & 1.0000 & 0.0000 & 1.0827 \\
\midrule
FORA ($N{=}3$) & 1138.16 & $2.751\times$ & 10.6779 & 0.4636 & 0.5917 & \textbf{0.9925} \\
TaylorSeer ($N{=}3$) & 1138.36 & $2.750\times$ & 11.3666 & 0.5295 & 0.5107 & 0.7922 \\
\rowcolor{gray!20}
\softcap{} (F18) & 1080.78 & $2.897\times$ & \textbf{14.4459} & \textbf{0.6540} & \textbf{0.3533} & 0.9832 \\
\rowcolor{gray!20}
\softcap{} (F20) & 1256.06 & $2.493\times$ & 14.6116 & 0.6641 & 0.3433 & 0.9968 \\
\midrule
FORA ($N{=}4$) & 889.08 & $3.521\times$ & 10.0990 & 0.4013 & 0.6608 & 0.9206 \\
TaylorSeer ($N{=}4$) & 889.30 & $3.521\times$ & 10.1756 & 0.4678 & 0.5964 & 0.4672 \\
\rowcolor{gray!20}
\softcap{} (F15) & 948.78 & $3.300\times$ & \textbf{13.6103} & \textbf{0.6071} & \textbf{0.3971} & \textbf{0.9303} \\
\bottomrule
\end{tabular}
\caption{Quantitative comparison on SD3.5 Large. \textbf{Bold} marks the best accelerated result per compute regime; complete results are in Supplementary Appendix~F.}
\label{tab:sd35main}
\end{table}

%% file: tab/ablation_bootstrap.tex
\begin{table}[t]
\centering
\small
\setlength{\tabcolsep}{3.5pt}
\begin{tabular}{lrrrr}
\toprule
Variant & Full & PSNR$\uparrow$ & LPIPS$\downarrow$ & $\Delta$PSNR \\
\midrule
Full \softcap{} & 15.00 & 24.760 & 0.1319 & --- \\
w/o tail reserve & 15.00 & 18.205 & 0.4967 & $\mathbf{-6.55}$ \\
w/o PI feedback & 15.00 & 22.640 & 0.1667 & $-2.12$ \\
warmup $K{=}3$ & 15.00 & 22.730 & 0.1815 & $-2.03$ \\
w/o amplification, $A{\equiv}1$ & 15.00 & 24.196 & 0.1338 & $-0.56$ \\
w/o warm-start & 15.00 & 24.396 & 0.1360 & $-0.36$ \\
w/o age cap & 14.50 & 24.412 & 0.1403 & $-0.35$ \\
\bottomrule
\end{tabular}
\caption{Leave-one-out ablation at F15 (20 development prompts; FID omitted at $n{=}20$). Removing the tail reserve collapses quality at \emph{identical} Full count and FLOPs.}
\label{tab:ablation}
\end{table}

%% file: sec/5_conclusion.tex
\section{Conclusion}

We presented \softcap{}, a training-free control layer that turns cache refresh into budget-conditioned feedback. An error-amplification profile drives both the drift-risk observer and reference tracking, while a mandatory warmup and closed-form guards transfer one calibration across budget tiers. On FLUX.1-dev, \softcap{} leads Full-referenced fidelity among training-free baselines at every tier ($+2.7$~dB paired PSNR at the main point); paired-bootstrap and ablation results attribute the gains to Full-evaluation placement rather than added compute. On SD3.5 Large, risk-aware refreshes also substantially improve the same Taylor cache engine over its fixed-interval schedule. \softcap{} is a policy layer, not a replacement cache engine; broader engines and sampling settings remain future work.

\paragraph{Limitations.}
At the tightest budgets, mandatory warmup and safety guards leave fewer Full evaluations for prompt-adaptive placement, reducing sequence diversity. Therefore, we plan to further investigate mechanisms that preserve prompt adaptivity in highly constrained budget regimes.